\def\1{\bm{1}}
\def\vepsilon{{\bm{\epsilon}}}
\def\valpha{{\bm{\alpha}}}
\def\vc{{\bm{c}}}
\def\vf{{\bm{f}}}
\def\vh{{\bm{h}}}
\def\vo{{\bm{o}}}
\def\vq{{\bm{q}}}
\def\vv{{\bm{v}}}
\def\vw{{\bm{w}}}
\def\vx{{\bm{x}}}
\def\vy{{\bm{y}}}
\def\vz{{\bm{z}}}
\def\vsF{{\mathcal{F}}}
\def\vsH{{\mathcal{H}}}
\def\vsX{{\mathcal{X}}}
\def\vsY{{\mathcal{Y}}}
\def\vsZ{{\mathcal{Z}}}
\def\mA{{\bm{A}}}
\def\mB{{\bm{B}}}
\def\mC{{\bm{C}}}
\def\mI{{\bm{I}}}
\def\mK{{\bm{K}}}
\def\mO{{\bm{O}}}
\DeclareMathAlphabet{\mathsfit}{\encodingdefault}{\sfdefault}{m}{sl}
\SetMathAlphabet{\mathsfit}{bold}{\encodingdefault}{\sfdefault}{bx}{n}
\newcommand{\operator}[1]{\mathsf{#1}}
\def\oA{{\operator{A}}}
\def\oB{{\operator{B}}}
\def\oI{{\operator{I}}}
\def\oK{{\operator{K}}}
\def\oO{{\operator{O}}}
\def\oQ{{\operator{Q}}}
\def\oT{{\operator{T}}}
\def\oU{{\operator{U}}}
\def\sD{{\mathbb{D}}}
\def\sI{{\mathbb{I}}}
\newcommand{\R}{\mathbb{R}}
\DeclareMathOperator*{\argmin}{arg\,min}
\DeclarePairedDelimiterX{\norm}[1]{\lVert}{\rVert}{#1}
\newcommand{\st}{\;|\;}
\newcommand{\stforall}{, \; \forall \; }
\newcommand{\C}{\mathbb{C}} 
\newcommand{\N}{\mathbb{N}} 
\newcommand{\transpose}{\intercal}
\newcommand{\innerprod}[3][]{{\langle #2, #3 \rangle}_{#1}}
\newcommand{\G}[1][]{\mathbb{G}_{\scalebox{0.6}{$#1$}}}       
\newcommand{\KleinFourGroup}{\mathbb{K}_{4}}            
\newcommand{\CyclicGroup}[1][]{\mathbb{C}_{#1}}        
\newcommand{\UGroup}[1][]{\mathbb{U}(#1)}
\newcommand{\GLGroup}[1][]{\mathbb{GL}({#1})}          
\newcommand{\g}{g}       
\newcommand{\Glact}{\mathrel{\mathsmaller{\triangleright}}}             
\newcommand{\Gcomp}{\mathrel{\mathsmaller{\circ}}}                      
\newcommand{\mapping}[5]{ 
        \begin{matrix}
            #1:& #2 & \longrightarrow & #3 \\
            & #4    & \longrightarrow & #5
        \end{matrix}
}
\newcommand{\Oplus}{\ensuremath{\vcenter{\hbox{\scalebox{1.1}{$\bm{\oplus}$}}}}}
\newcommand{\irrepMultiplicity}[1][]{m_{#1}}
\newcommand{\rep}[2][]{  
    {\rho_{
        {\vcenter{\hbox{\scalebox{0.7}{$#1$}}}}
    }
    \def\temp{#2}\ifx\temp\empty
    \else
      (#2)%
    \fi
    }
} 
\newcommand{\irrep}[2][]{
    {\bar{\rho}_{\scriptscriptstyle{#1}}
    \def\temp{#2}\ifx\temp\empty
    \else
      (#2)%
    \fi
    }
} 
\newcommand{\homomorphism}[3][\G]{\text{Homo}_{#1}(#2,#3)}
\newcommand{\homomorphismDiag}[6]{
    \xymatrix@C=1.5em{
    #1 \ar@{-}[r]^{#6}    \ar[d]^{#5}    & #2 \ar[d]^{#5} \\
    #3 \ar@{-}[r]^{#6}                   & #4
    }
}
\newcommand{\isomorphism}[3][\G]{\text{Iso}_{#1}(#2,#3)}
\newcommand{\isomorphismDiag}[5]{
    \xymatrix{
        #1 \ar@{-}[r]^{#3}    \ar@{-}[d]^{#5} & #1 \ar@{-}[d]^{#5} \\
        #2 \ar@{-}[r]^{#4}                   & #2
    }
}
\newcommand{\changeOfBasis}[1][]{\oQ_{#1}}     
\newcommand{\isoCompNum}{{k}}
\newcommand{\isoCompIdx}{{i}}
\newcommand{\isoCompIrrepIdx}{{j}}
\newcommand{\HilbertSpace}[1][]{\mathcal{X}_{#1}}
\newcommand{\identity}[1][]{\bm{1}}
\newcommand{\Time}{\mathbb{T}}
\newcommand{\measure}[1][]{\mu_{#1}}     
\newcommand{\equivLinMap}[1][]{\tensor*[]{\oT}{#1}}
\newcommand{\confSpace}{\mathcal{Q}}
\newcommand{\tangConfSpace}[1][\q]{\mathcal{T}_{#1}\confSpace}
\newcommand{\detDynMap}[1][\domain]{\Phi_{\scalebox{0.6}{$#1$}}^{\scalebox{0.7}{$\dt$}}}
\newcommand{\dynRepError}[1][]{\text{err}_{#1}}
\newcommand{\state}{\omega} 
\newcommand{\predState}{\tilde{\omega}}         
\newcommand{\domain}{{\Omega}}        
\newcommand{\basisSet}{\sI}
\newcommand{\dt}{{\mathrel{\mathsmaller{\Delta}t}}}
\newcommand{\dimModelSpace}{m}
\newcommand{\dimLatModelSpace}{\ell}
\newcommand{\q}[1][]{\vq_{#1}}                    
\newcommand{\dq}[1][]{\dot{\vq}_{#1}}                   
\newcommand{\obsSpace}[1][]{
    \ifthenelse{\isempty{#1}}
    {
                \vsF
    } 
    {
        \scalebox{0.9}{
        $
                \vsF_{#1}
        $        
        }    
    } 
}
\newcommand{\approxObsSpace}[1][]{
    \ifthenelse{\isempty{#1}}
    {
            \widetilde{\vsF}
    } 
    {
        \scalebox{0.83}{
            $
            \tensor*[]{\widetilde{\vsF}}{#1}
            $
            }
    } 
}
\newcommand{\equivObsSpaceDual}[1][_{\measure[t]}^{*\G}]
{
    \tensor*[]{\scalebox{0.8}{$\mathcal{X}$}}{#1}
}
\newcommand{\obsState}[1][]{\vx_{#1}}               
\newcommand{\latObsState}[1][]{\vz_{#1}}               
\newcommand{\obsFn}{x}
\newcommand{\repObsFn}{\valpha}                  
\newcommand{\latObsFn}{z}
\newcommand{\encoder}[1][\nnParams]{\nn[#1]}
\newcommand{\decoder}[1][\nnParams]{\nn[#1]^{\mathrel{\mathsmaller{-1}}}}
\newcommand{\nn}[1][\nnParams]{\vf_{#1}}
\newcommand{\loss}{\mathcal{L}}
\newtheorem{theorem}{Theorem}
\newtheorem{lemma}{Lemma}
\newtheorem{definition}{Definition}      
\setlist{leftmargin=5.5mm}  
\definecolor{gray}{rgb}{0.6, 0.7, 0.7}
\definecolor{awesomeblue}{rgb}{0.054, 0.415, 0.505}
\definecolor{awesomeorange}{rgb}{0.570, 0.458, 0.0912}
\newcommand{\highlight}[1]{{\textit{#1}}}
\newtcolorbox{highlightBox}[3][]
{
  colframe = #2!25,
  colback  = #2!15,
  coltitle = #2!20!black,  
  title    = {\textbf{#3}},
  #1,
}
\newcommand{\ubcolor}[3][awesomeblue]{{
        \color{#1}{
            \underbrace{\color{black}{#2}}_{#3}
        }
    }}
\crefname{section}{sec}{sec}    
\crefname{definition}{def.}{defs.}
\crefname{proposition}{prop.}{props.}
\crefname{theorem}{thm.}{thms.}
\crefname{figure}{fig.}{figs.}
\crefname{section}{sec.}{secs.}
\crefname{appendix}{appendix}{appendix}
\title[Dynamics Harmonic Analysis of Robotic Systems]{Dynamics Harmonic Analysis of Robotic Systems: \\Application in Data-Driven Koopman Modelling}
\begin{document}

\maketitle
\vspace*{-2em}
\begin{abstract}%
    \noindent
    We introduce the use of harmonic analysis to decompose the state space of symmetric robotic systems into orthogonal isotypic subspaces. These are lower-dimensional spaces that capture distinct, symmetric, and synergistic motions. For linear dynamics, we characterize how this decomposition leads to a subdivision of the dynamics into independent linear systems on each subspace, a property we term dynamics harmonic analysis (DHA). To exploit this property, we use Koopman operator theory to propose an equivariant deep-learning architecture that leverages the properties of DHA to learn a global linear model of the system dynamics. Our architecture, validated on synthetic systems and the dynamics of locomotion of a quadrupedal robot, exhibits enhanced generalization, sample efficiency, and interpretability, with fewer trainable parameters and computational costs.

\end{abstract}

\begin{keywords}%
    Symmetric dynamical systems, Harmonic analysis, Koopman operator, Robotics
\end{keywords}

\section{Introduction}
The current state-of-the-art in modelling, control, and estimation of robotic systems relies on the Lagrangian model of rigid-body dynamics, which represents the system's state as a point in the space of generalized (or minimal) coordinates. This approach has fostered the development of efficient algorithms that leverage the system's kinematic structure to enable recursive computations which are ubiquitous in methods for simulation, estimation, planning, and control in robotics ~\citep{featherstone-rbdbook}. However, because the model's dynamics are nonlinear, these methods need to cope with the challenges of nonlinear optimization, often through iterative local (or state-dependent) linearizations \citep{mayne1966DDP,li2004iLQR}. This can limit control policies to local minima, hinder the convergence of optimization methods, and bias the estimation of unobserved quantities.

The Koopman operator framework can potentially address these limitations by deriving globally linear models of robot dynamics, albeit in an infinite-dimensional function space \citep{brunton-koopman,kostic2023deep}.
These models can be easily used with various estimation and control theory techniques \citep{mauroy2020koopman} and can capture dynamic phenomena that impact the system's evolution but are challenging to model analytically \citep{asada2023global}. However, building a robust data-driven model approximation in finite dimensions is a substantial machine-learning challenge. 
This study highlights the value of leveraging the state symmetries inherent in a system's dynamics as a geometric prior, enhancing the approximation of the operator. This approach improves sample efficiency, generalization, and interpretability while reducing the number of trainable parameters and computational cost. Although our emphasis is on robotics, the proposed framework is broadly applicable to any dynamical system with discrete symmetry groups.

\begin{figure}[t!]
    \centering
    \includegraphics[width=\linewidth]{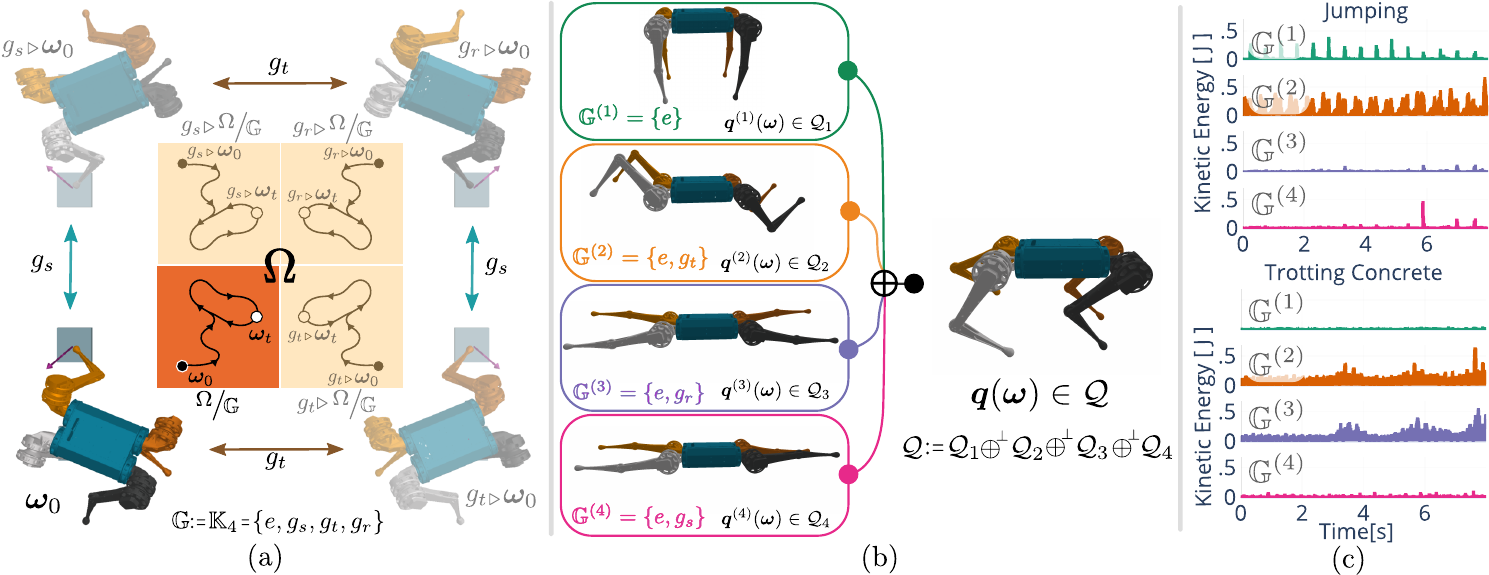}
    \caption{
        \small
        (a) Diagram of the discrete symmetry group $\G:=\KleinFourGroup$ of the mini-cheetah robot (see \href{https://github.com/Danfoa/MorphoSymm/blob/devel/docs/static/animations/mini_cheetah-Klein4-symmetries_anim_static.gif?utm_source=l4dc&utm_medium=l4dc&utm_campaign=l4dc}{animation}).
        Each symmetry $\g\in \G$ relates states that evolve identically under physics laws; see \href{https://github.com/Danfoa/MorphoSymm/blob/devel/docs/static/dynamic_animations/mini-cheetah-dynamic_symmetries_forces.gif?utm_source=l4dc&utm_medium=l4dc&utm_campaign=l4dc}{evolution} of states related by $\g_{s}$.
        This results in the decomposition of the set of states $\domain$ into $\g$-transformed copies of the quotient set $\sfrac{\domain}{\G}$, encompassing all unique system states.
        (b) Isotypic decomposition of the robot's space of generalized position coordinates $\confSpace$ into isotypic subspaces:
        $
            \confSpace:= \Oplus_{\isoCompIdx=1}^{4} \confSpace_\isoCompIdx
        $. Each subspace, $\confSpace_\isoCompIdx$ describes a space of symmetry-constrained synergistic motions. Consequently, any position configuration $\q(\state) \in \confSpace$, can be decomposed into projections within these subspaces:
        $
            \q(\state) := \Oplus_{\isoCompIdx=1}^{4} \q^{(i)}(\state)
        $
        (see \href{https://github.com/Danfoa/DynamicsHarmonicsAnalysis/blob/main/media/DynamicsHarmonicAnalysis_mini_cheetah_K4.md?utm_source=l4dc&utm_medium=l4dc&utm_campaign=l4dc}{animation}).
        (c) Joint-space Kinetic energy distribution across isotypic subspaces for two gait/motion trajectories in the real world: jumping and trotting (see \href{https://github.com/Danfoa/DynamicsHarmonicsAnalysis/blob/main/media/DynamicsHarmonicAnalysis_mini_cheetah_K4.md?utm_source=l4dc&utm_medium=l4dc&utm_campaign=l4dc}{animation}). Both gaits primarily evolve within one or two lower-dimensional isotypic subspaces, with less significant subspaces engaged temporarily during disturbances.
    }
    \vspace*{-0.6em}
    \label{fig:teaser}
\end{figure}

In robotics, the symmetries we aim to exploit are known as morphological symmetries \citep{Ordonez-Apraez-RSS-23}. These are discrete symmetry groups that capture the equivariance of the robot's dynamics, arising from the duplication of rigid bodies and kinematic chains (see \cref{fig:teaser}-a). 
Inspired by the use of harmonic analysis of discrete symmetry groups in physics~\citep{dresselhaus2007group_theory_applications_to_physics_of_condensed_matter}, we present the first application of these principles in robotics. Specifically, we employ the isotypic decomposition (\cref{thm:iso}) to partition the state space of symmetric robotic systems into isotypic subspaces (see \cref{fig:teaser}-b). This allows us to represent any system motion (e.g., different locomotion gaits or manipulation movements) as a superposition of simpler, symmetric, synergistic motions\footnote{
    These are the robotics analog of normal vibrational modes in molecular dynamics; see \citep[8.3]{dresselhaus2007group_theory_applications_to_physics_of_condensed_matter}
}; each evolving in a distinct isotypic subspace (see \cref{fig:teaser}-c).
Moreover, we demonstrate that the isotypic decomposition leads to partitioning any (local or global) linear model of the system's dynamics into independent linear subsystems, each characterizing the dynamics of an isotypic subspace. This offers numerous computational advantages and enhances the interpretability of the dynamics.
%
\paragraph{Contributions} In summary, our work relies on the following contributions: (i) we introduce the use of harmonic analysis of discrete symmetry groups to robotics; (ii) we propose the concept of dynamics harmonic analysis (DHA), illustrating how local/global linear dynamics models decompose into linear subsystems within isotypic subspaces (\cref{sec:symmetries_dynamical_systems}); (iii) leveraging on DHA, we propose the equivariant Dynamics Autoencoder (eDAE), a deep-learning architecture to approximate the Koopman operator (\cref{sec:method}), and report its strong performance on synthetic and robotics systems (\cref{sec:experiments}); and (iv) we provide an open-access \href{https://danfoa.github.io/DynamicsHarmonicsAnalysis/}{repository} enabling the application of harmonic analysis to a \href{https://github.com/Danfoa/MorphoSymm/tree/devel?tab=readme-ov-file#library-of-symmetric-dynamical-systems}{library} of symmetric robots,  the use of the eDAE architecture, and our experiments' reproduction.
\paragraph{Related work}
For a data-driven approximation of Koopman operators \citet{noe2013vamp}, \citet{lusch2018deep}, and \citet{kostic2023deep} introduce symmetry-agnostic deep learning algorithms, which can be adapted to exploit DHA, as proposed in \cref{sec:method}. In the spirit of our work, \cite{steyert2022uncovering_structure} studies the operator's structure for systems evolving on manifolds, i.e., featuring continuous symmetry groups. For discrete symmetry groups, \citet{salova2019koopman} leverages the operator's block-diagonal structure after using harmonic analysis on a non-learnable dictionary of observable functions to model Duffing oscillators. Lastly, \cite{mesbahi2019modal, sinha2020koopman} provides a theoretical analysis of the operator's structure to model symmetric dynamical systems.

In robotics, the linear nature of Koopman operator models makes them compatible with standard modelling, estimation, and control algorithms. This includes optimal and robust control \citep{korda2018linear,folkestad2021koopmanNMPC,zhang2022robust}, active learning \citep{abraham2019active_learning_koopman}, and system identification and observer synthesis \citep{bruder2020data,surana2020koopman}. Yet, symmetries remain unexploited in these Koopman-based approaches.

\section{Preliminaries}
\label{sec:preliminaries}

This section overviews background material on dynamical systems modelling, Koopman models, and harmonic analysis needed to address the modelling of symmetric dynamical systems in \cref{sec:symmetries_dynamical_systems}.
\paragraph{Modelling of dynamical systems}
 
In our analysis, distinguishing between a dynamical system and its numerical model is crucial. A dynamical system abstracts evolving real-world phenomena, such as a robot's motion in an environment. Conversely, numerical models \highlight{approximate} the system's dynamics as the time evolution of points (representing the system's state) in a vector space. A dynamical system is typically denoted by a tuple $(\domain, \Time, \detDynMap)$, where $\domain$ is the abstract set of system states $\state \in \domain$, the set $\Time$ represents time, and $\detDynMap\colon \domain\cross\Time\mapsto\domain$ is the evolution map, determining the future state based on the present state and time.  Given deterministic Newtonian mechanics governs our target systems, we focus on deterministic, Markovian, discrete-time systems. Here, $\Time = \N_0$ and $\detDynMap$ depends solely on the time difference between consecutive timesteps $\dt$, i.e., $\state_{t+\dt} := \detDynMap(\state_{t})$ for any $t \in \Time, \state \in \domain$. Thus, we will denote a dynamical system by 
 $(\domain, \detDynMap)$.

Defining a numerical model of a dynamical system involves identifying a state representation vector-valued function $\obsState = [\obsFn_{1}, \dots, \obsFn_{\dimModelSpace}]: \domain \rightarrow \vsX \subseteq \R^{\dimModelSpace}$, where the components, $\obsFn_{j}: \domain \rightarrow \R, j\in[1, \dimModelSpace]$, are observable functions that measure a relevant scalar quantity from the state (e.g., kinetic energy, joint position/velocity). 
This enables the representation of the state as a point in the model's vector space $\obsState(\state) \in \vsX$.
The system's evolution, represented as a trajectory $(\obsState(\state_{t}))_{t \in \Time}$, can be approximated by an evolution map $\obsState(\predState_{t+\dt}) := \detDynMap[\vsX](\obsState(\state_{t})) + \kappa(\state_t) \Gamma_t$. Here, the predicted state representation $\obsState(\predState_{t+\dt})$ may differ from the true one $\obsState(\state_{t+\dt})$ due to modelling errors and/or inaccessible observables. The influence of these factors is assumed to be captured by a mild white noise stochastic perturbation $\Gamma_t$, scaled by $\kappa(\state_t)$ \citep[chpt. 10.5]{Lasota1994}. When $\obsState: \domain \to \vsX$ is injective, the optimality of the model $(\vsX, \detDynMap[\vsX])$, over a given prediction horizon $H \in \Time$, is quantified by its predictive error \citep{mezic2021koopman_geometry}, i.e.,

\begin{equation}\label{eq:modelling_error}
    \small
    \dynRepError[H](\obsState, \detDynMap[\vsX]) = \int_\domain \dynRepError[{\state_{0}, H}](\obsState, \detDynMap[\vsX]) d\state_{0}, 
    \quad 
    \text{given} 
    \quad   
    \dynRepError[{\state_{0}, H}](\obsState, \detDynMap[\vsX])
    := 
    \sum_{h=1}^H ||\obsState(\predState_{h\dt}) - \obsState(\state_{h\dt})||^2.
\end{equation}

\noindent
Note that while modelling error can be locally minimal for specific $\state_{0}$ (r.h.s \cref{eq:modelling_error}), optimal models exhibit uniformly small errors across all states and horizons (l.h.s \cref{eq:modelling_error}).
%
\paragraph{Linear models and the Koopman operator}
%
A linear model $(\vsZ,\mK_\dt)$ is a dynamics model where the temporal evolution of state representations is characterized by an autonomous linear system,
$
\latObsState(\predState_{t+\dt}) = \detDynMap[\vsZ](\latObsState(\state)):= \mK_\dt \latObsState(\state),  \state \in \domain, \dt \in \Time
$. Here, $\latObsState = [\latObsFn_1, \dots]: \domain \mapsto \vsZ$ is the state representation function, and 
$
    \mK_\dt\colon\vsZ\mapsto\vsZ
$ is a matrix evolving state representations in time by $\dt$. 
The linearity of these models ensures the existence of analytical solutions to the temporal dynamics of each state observable and guarantees the interpretability of the predictions through modal decomposition, making them fundamental to dynamical systems and control theory. 

Although most dynamical systems of interest in robotics have been historically modeled with nonlinear analytic dynamics, one can also devise an optimal linear model in an infinite-dimensional space. This idea, originating from the seminal work of Koopman and Markov \citep{Lasota1994},
proposes to represent the state as a point/function in a \textit{space of functions} $\obsSpace[\vsZ]$, and model the dynamics with a linear operator $\oK_\dt$, defined by the flow $\detDynMap[\domain]$ and a time step $\Delta t\in \Time$, which takes any function $\latObsFn(\cdot) \in \obsSpace[\vsZ]$ to $\latObsFn(\detDynMap[\domain](\cdot, \Delta t))$. Whenever the image of $\oK_\dt$ is in the same space, $\oK_\dt: \obsSpace[\vsZ] \mapsto \obsSpace[\vsZ]$ is a well-defined linear operator known as the \textit{Koopman operator}, defined by
%
\begin{equation}\label{eq:koopman}
    [\oK_\dt \, \latObsFn](\state_t) := \latObsFn(\detDynMap[\domain](\state_t)), \, \latObsFn \in \obsSpace[\vsZ], \, \state\in \domain.
\end{equation}

\noindent
The requirement for the space $\obsSpace[\vsZ]$ to be invariant under the flow $\detDynMap$ is the characteristic that often renders it infinite-dimensional. As we discuss in \cref{sec:method}, machine learning can aid in finding finite-dimensional approximations of $\obsSpace[\vsZ]$ and $\oK_\dt$ \citep{lusch2018deep,kostic2023deep}.
\paragraph{Symmetry groups and their representations}
In the context of a dynamical systems, symmetries are defined as bijections on the state set $\domain$. The action of a symmetry transformation $\g$ on any state $\state \in \domain$ is defined by a map $(\,\Glact\,): \G \times \domain \rightarrow \domain$ to a symmetric state $\g \Glact \state \in \domain$ (see \cref{fig:teaser}-a). A set of symmetry transformations forms a group $\G=\{e, \g_{1}, \g_{2}, \dots\}$, that is closed under {\textit{composition}:} $\g_{1} \Gcomp \g_{2} \in \G$ for all $\g_{1}, \g_{2} \in \G$, and {\textit{inversion}:} $\g^{-1} \in \G \st \g \in \G$ such that $\g^{-1}  \Gcomp g = e$, where $e$ denotes the identity transformation and $(\Gcomp)$ is the binary composition operation on $\G$.

As we study symmetries of numerical models in both finite-dimensional (Euclidean) spaces and infinite-dimensional function spaces, we assume $\vsH$ to be a separable Hilbert space to accommodate both scenarios. This enable us to rely on the conventional concepts of inner product, orthogonality, and countably many basis elements (see \cref{appendix:observable_functions_and_numerical_representation}). Consequently, symmetry transformations on $\vsH$ are defined via a unitary group representation $\rep[\vsH]{}: \G \rightarrow \UGroup[\vsH]$, mapping each $\g\in \G$ to a unitary matrix/operator $\rep[\vsH]{\g} \in \UGroup[\vsH]: \vsH \rightarrow \vsH$. Thus, the action of any $\g \in \G$ on a point $\vh \in \vsH$ is expressed as $\g \Glact \vh := \rep[\vsH]{\g} \vh \in \vsH$. When $\rep[\vsH]{}$ exists, we say that $\vsH$ is a symmetric space.


A map $f:\vsH \mapsto \vsH'$ between two symmetric spaces is denoted $\G$-equivariant if $f(\rep[\vsH]{\g} \vh) = {\rep[\vsH']{g}} f(\vh)$, and $\G$-invariant if $f(\vh) = f(\rep[\vsH]{\g} \vh)$, for all $\g \in \G$. While the action of a symmetry transformation on $\vsH$ is basis independent, the representation of $ \g\in \G$ depends on the chosen basis of $\vsH$. Therefore, applying a change of basis $\changeOfBasis \in \UGroup[\vsH]$ results in a new point representation $\vh_{\circ} = \changeOfBasis \vh$, and a new group representation $\rho_{\vsH,\circ}(\cdot) = \changeOfBasis \rep[\vsH]{}(\cdot)\changeOfBasis^*$. Here, both representations describe the same transformation: $\rep[\vsH]{g}\vh \iff \rho_{\vsH,\circ}(\g) \vh_{\circ}$. Consequently, representations related by a basis change are termed equivalent, denoted $\rep[\vsH]{} \sim \rho_{\vsH,\circ}$. Lastly, writing $\rep[\vsH]{} \sim \rep[\vsH_1]{} \oplus \rep[\vsH_2]{}$ implies that $\vsH$ decomposes into orthogonal subspaces $\vsH_1$ and $\vsH_2$, and $\rep[\vsH]{}$ has block-diagonal structure.



\paragraph{Isotypic decomposition and its basis}
Our use of harmonic analysis is linked with the decision to work in a specific basis for the modelling space $\vsH$, the \highlight{isotypic basis}. In this basis, the unitary group representations $\rep[\vsH]{}$ decomposes into a block-diagonal sum of multiple copies (\highlight{multiplicities}) of the group's $\isoCompNum$ unique irreducible representations (\textsl{irrep}) $\{\irrep[\isoCompIdx]{}\}_{\isoCompIdx=1}^{\isoCompNum}$. These are the indivisible building blocks of any group representation of $\G$.
Each $
    \irrep[\isoCompIdx]{}: \G \rightarrow \UGroup[{\bar{\vsH}_{\isoCompIdx}}]
$, describes a unique symmetry pattern, characterized by a subset of symmetry transformations within the broader group structure. The space ${\bar{\vsH}_{\isoCompIdx}}$ associated to each \textsl{irrep} is the smallest finite-dimensional space capable of expressing the \textsl{irrep} symmetry pattern. For instance, our groups $\G$ are often subgroups of the orthogonal group. Thus, we frequently work with \textsl{irreps} $\irrep[tr]{}$ that describe a reflection symmetry, requiring a $1$-dimensional space $\bar{\vsH}_{\tr} \sim \R$ to act on, or with \textsl{irreps} $\irrep[\nicefrac{2\pi}{a}]{}$ that describe rotations by an angle $\nicefrac{2\pi}{a}$, requiring a $2$-dimensional space $\bar{\vsH}_{\nicefrac{2\pi}{a}} \sim \R^2 \sim \C$ to act on.  These representations are called irreducible because the spaces ${\bar{\vsH}_{\isoCompIdx}}$ have no non-trivial invariant subspace to the actions of $\G$. That is, if $\mathcal{V}$ is a subspace of $\bar{\vsH}_{\isoCompIdx}$ and $\irrep[\isoCompIdx]{\g} \mathcal{V} \subset \mathcal{V} $ for every $\g\in \G$, then either $\mathcal{V}=\{0\}$ or $\mathcal{V}=\bar{\vsH}_{\isoCompIdx}$.

The value of the isotypic basis lies in the fact that it allows decomposing the modelling space $\vsH$ into an orthogonal sum of isotypic subspaces. Each subspace reflects a unique symmetry pattern of one of the group's \textsl{irreps}, hence the term \textit{iso-typic} or \textit{same-type}. This is a pivotal result in abstract harmonic analysis, succinctly captured by the Peter-Weyl  Theorem, see \citep[thm 1.12]{Knapp1986} and \citep[Thm-2.5]{golubitsky2012singularities_groups_bifurcation}.
%
\begin{theorem}[Isotypic Decomposition]
    \label{thm:iso}
    Let $\G$ be a compact symmetry group and $\vsH$ a symmetric separable Hilbert space with a unitary group representation $\rep[\vsH]{}: \G \rightarrow \UGroup[\vsH]$. Then we can identify $\isoCompNum \leq |\G|$ irreducible representations $\irrep[\isoCompIdx]{}: \G \rightarrow \UGroup[{\bar{\vsH}_{\isoCompIdx}}]$ and change of basis $\oQ \in \UGroup[\vsH]$ such that $
    \rep[\vsH]{}
    \sim
    \rep[{\vsH_{1}}]{} \oplus \rep[{\vsH_{2}}]{} \oplus \cdots \oplus \rep[{\vsH_{\isoCompNum}}]{}
    $, and each $\rep[{\vsH_{\isoCompIdx}}]{} \sim \bigoplus_{\isoCompIrrepIdx=1}^{\irrepMultiplicity[\isoCompIdx]} \irrep[\isoCompIdx]{}$ is composed of at most $\irrepMultiplicity[\isoCompIdx]$ countably many copies of the irreducible representation $\irrep[\isoCompIdx]{}$. This allows to decompose $\vsH$ into orthogonal subspaces: $\vsH = \vsH_{1} \oplus^\perp \vsH_{2} \oplus^\perp \cdots \oplus^\perp \vsH_{\isoCompNum}$. Where each $\vsH_{\isoCompIdx}:= \bigoplus_{\isoCompIrrepIdx=1}^{\irrepMultiplicity[\isoCompIdx]} \vsH_{\isoCompIdx,\isoCompIrrepIdx} $, composed of $\irrepMultiplicity[\isoCompIdx]$ subspaces isometrically isomorphic to ${\bar{\vsH}_{\isoCompIdx}}$, is denoted as an isotypic subspace.
\end{theorem}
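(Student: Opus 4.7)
The plan is to proceed in three stages: construct orthogonal projectors onto each isotypic component by group averaging against characters, establish that these projectors decompose $\vsH$, and finally refine each isotypic subspace into copies of a single irreducible representation. The global strategy is essentially an application of the Peter--Weyl theorem for compact groups, whose role is to supply both the complete list of irreducibles and the completeness of matrix coefficients in $L^{2}(\G)$ that drives the decomposition.

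First, I would invoke Peter--Weyl to enumerate the (at most countable) set of equivalence classes of irreducible unitary representations $\{\irrep[\isoCompIdx]{} : \G \to \UGroup[{\bar{\vsH}_\isoCompIdx}]\}_\isoCompIdx$ of the compact group $\G$, all of which are finite-dimensional with dimension $d_\isoCompIdx := \dim \bar{\vsH}_\isoCompIdx$ and character $\chi_\isoCompIdx(\g) = \Tr(\irrep[\isoCompIdx]{\g})$. Using the normalized Haar measure $\mu$ on $\G$, define for each $\isoCompIdx$ the bounded operator
\[
P_\isoCompIdx \;=\; d_\isoCompIdx \int_{\G} \overline{\chi_\isoCompIdx(\g)} \, \rep[\vsH]{\g} \, d\mu(\g),
\]
interpreted as a Bochner integral on $\vsH$. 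Unitarity of $\rep[\vsH]{}$ together with the Schur orthogonality relations imply that each $P_\isoCompIdx$ is an orthogonal projection, that $P_\isoCompIdx P_{\isoCompIdx'} = 0$ for $\isoCompIdx \neq \isoCompIdx'$, and that each $P_\isoCompIdx$ commutes with $\rep[\vsH]{\g}$ for every $\g \in \G$. Setting $\vsH_\isoCompIdx := P_\isoCompIdx \vsH$ then yields mutually orthogonal, closed, $\G$-invariant subspaces, and the nonzero ones are indexed by $\isoCompIdx = 1, \dots, \isoCompNum$ with $\isoCompNum \leq |\G|$ in the finite-group case.

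Second, I would show $\vsH = \vsH_{1} \oplus^\perp \cdots \oplus^\perp \vsH_{\isoCompNum}$ by establishing $\sum_\isoCompIdx P_\isoCompIdx = \identity$ strongly on $\vsH$. This is the analytic core of the argument: for any $\vh \in \vsH$, the orbit map $\g \mapsto \rep[\vsH]{\g} \vh$ lies in $L^{2}(\G, \mu; \vsH)$, and the Peter--Weyl density of matrix coefficients in $L^{2}(\G, \mu)$ transfers, via the group averaging defining $P_\isoCompIdx$, to completeness of the isotypic projectors on $\vsH$. Within each $\vsH_\isoCompIdx$ the restricted representation contains only copies of $\irrep[\isoCompIdx]{}$; for any nonzero $\vh \in \vsH_\isoCompIdx$, the cyclic subspace generated by $\{\rep[\vsH]{\g} \vh\}_{\g \in \G}$ is finite-dimensional and, by Schur's lemma, isometrically isomorphic to a finite orthogonal sum of copies of $\bar{\vsH}_\isoCompIdx$. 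A Zorn's lemma argument then extracts a maximal orthogonal family $\{\vsH_{\isoCompIdx, \isoCompIrrepIdx}\}_{\isoCompIrrepIdx}$ of $\G$-invariant subspaces of $\vsH_\isoCompIdx$, each isometrically $\G$-equivariantly isomorphic to $\bar{\vsH}_\isoCompIdx$; maximality forces the orthogonal sum to equal $\vsH_\isoCompIdx$, and separability of $\vsH$ bounds the multiplicity $\irrepMultiplicity[\isoCompIdx]$ to be at most countable. Assembling the $\G$-equivariant isometries $\vsH_{\isoCompIdx, \isoCompIrrepIdx} \to \bar{\vsH}_\isoCompIdx$ into a single unitary produces the desired change of basis $\oQ \in \UGroup[\vsH]$.

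The main obstacle is the strong convergence $\sum_\isoCompIdx P_\isoCompIdx = \identity$ in the infinite-dimensional setting, which is not a formal consequence of Schur's lemma but rather of the Plancherel-type completeness of matrix coefficients in $L^{2}(\G, \mu)$. Everything else reduces either to finite-dimensional representation theory (Schur, complete reducibility) applied to each irreducible block, or to routine verifications that group averaging commutes with the representation and produces self-adjoint idempotents. In the finite-dimensional or finite-group cases the proof simplifies substantially, since the Haar integral becomes a finite sum and Maschke's theorem supplies complete reducibility directly.
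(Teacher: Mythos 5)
Your proposal is correct and follows the standard Peter--Weyl route: character-averaged orthogonal projectors onto isotypic components, completeness of these projectors via the density of matrix coefficients in $L^{2}(\G)$, and a Zorn/cyclic-subspace refinement of each component into countably many copies of a single irreducible. This is essentially the same approach the paper takes, since the paper does not prove the theorem itself but defers it entirely to the Peter--Weyl theorem as stated in Knapp (Thm.\ 1.12) and Golubitsky et al.\ (Thm.\ 2.5), which is exactly the machinery you reconstruct.
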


\section{Symmetries of dynamical systems}
\label{sec:symmetries_dynamical_systems}
In the context of dynamical systems, a symmetry is a state transformation that results in another functionally equivalent state under the governing dynamics.
From a modelling perspective, symmetries provide a valuable geometric bias, as identifying the dynamics of a single state suffices to capture the dynamics of all of its symmetric states (see \cref{fig:teaser}-a).

\begin{definition}[Symmetric dynamical systems]
    A dynamical system $(\domain,\detDynMap[\domain])$ is $\G$-symmetric, if $\G$ is a symmetry group of the set of states $\domain$, and the system's evolution map is $\G$-equivariant, i.e.,
    \begin{equation}
        \small
        \detDynMap[\domain](\g \Glact \state,t) = \g \Glact \detDynMap[\domain](\state, t), \quad \forall \; \g \in \G, t \in \Time, \state \in \domain.
        \label[equation]{eq:symmetric_dynamical_system}
    \end{equation}
    \vspace*{-2.1em}
    \label[definition]{def:symmetric_dynamical_system}
\end{definition}

\noindent
The symmetry group of $\domain$ defines an equivalence relationship between any state $\state \in \domain$ and its set of symmetric states, denoted $\G \state = \{\g \Glact \state | \g \in \G\}$. Given the $\G$-equivariance of the dynamics $\detDynMap[\domain]$, this equivalence implies that a set of symmetric states $\G \state$ will evolve along a unique trajectory of motion, up to a symmetry transformation $\g \in \G$ (see \cref{fig:teaser}-a). When the symmetry group is discrete (or finite), the state set $\domain$ decomposes into a union of symmetry-transformed copies of the quotient set $\sfrac{\domain}{\G}$, containing the system's unique states, that is $\domain = \cup_{\g\in \G}\{\g \, \Glact \, \sfrac{\domain}{\G} \}$ (see \cref{fig:teaser}-a).

\paragraph{Modelling symmetric dynamical systems}
When designing a numerical model $ \detDynMap[\vsX]$ for a symmetric dynamical system, it is crucial to ensure that the modelling space $\vsX$ inherits the group structure of $\domain$. This can be achieved by making the space invariant under the action of $\G$, i.e., $\obsState(\g\Glact(\cdot))\in \vsX$ for all $\g\in \G$. Thus, enabling the existence of a group representation $\rep[\vsX]{}: \G \rightarrow \UGroup[\vsX]$. The significance of this design choice lies in the fact that the equivalence between symmetric states is translated into the corresponding equivalence of their representations in the modelling space:
\begin{equation}
    \small
    \G \state 
    := 
    \{ \g \Glact \state \stforall \g \in \G \} 
    \iff 
    \G\obsState(\state) 
    := 
    \{ \g \Glact \obsState(\state) := \rep[\vsX]{\g}\obsState(\state) = \obsState(\g \Glact \state) \stforall \g \in \G \}, \;\forall\; \state \in \domain 
    \label{eq:symmetric_modelling_space}
\end{equation}
The symmetric structure of $\vsX$ allows its decomposition into $\g$-transformed copies of a quotient space $\sfrac{\vsX}{\G}$. This is a practical tool in data-driven applications, mitigating the effects of the curse of dimensionality \citep{higgins2022symmetry}. As the following result shows, it also narrows the search space for the evolution map $\detDynMap[\vsX]$ to the space of $\G$-equivariant ones.
\begin{proposition}[Optimal models of $\G$-symmetric systems]
    Let $\detDynMap[\domain]$ be a $\G$-symmetric dynamical system and $\detDynMap[\vsX]$ an optimal model (\cref{eq:modelling_error}). If $\vsX$ is a $\G$-symmetric space, $\detDynMap[\vsX]$ is $\G$-equivariant.
    \label[proposition]{prop:optimal_models}
\end{proposition}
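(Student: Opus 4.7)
The plan is to prove the claim by a $\G$-symmetrization argument. Write $\Phi := \detDynMap[\vsX]$, and for each $\g\in\G$ introduce the conjugated model $\Phi^{\g} := \rep[\vsX]{\g^{-1}} \circ \Phi \circ \rep[\vsX]{\g}$ together with its group average $\bar{\Phi} := \tfrac{1}{|\G|}\sum_{\g\in\G}\Phi^{\g}$. By construction $\bar{\Phi}$ is $\G$-equivariant, so it suffices to show that $\bar{\Phi}$ achieves predictive error no larger than $\Phi$: the assumed optimality of $\Phi$ then forces $\Phi = \bar{\Phi}$, which is precisely the desired equivariance.

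The first step is to establish the invariance $\dynRepError[H](\obsState,\Phi) = \dynRepError[H](\obsState,\Phi^{\g})$ for every $\g \in \G$. I would perform the substitution $\state_0 \mapsto \g \Glact \state_0$ inside \cref{eq:modelling_error} (which leaves the integral invariant under a $\G$-invariant reference measure on $\domain$), iterate \cref{eq:symmetric_dynamical_system} together with \cref{eq:symmetric_modelling_space} to rewrite $\obsState(\state_{h\dt}) \mapsto \rep[\vsX]{\g}\obsState(\state_{h\dt})$ and $\obsState(\state_0) \mapsto \rep[\vsX]{\g}\obsState(\state_0)$, and use the unitarity of $\rep[\vsX]{\g}$ to pull $\rep[\vsX]{\g^{-1}}$ inside each squared norm. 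This converts every instance of $\Phi^{h}$ into $(\Phi^{\g})^{h}$, yielding the identity term by term in $h$.

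The second step is a Jensen-type convexity bound. At every fixed $\state_0$, convexity of $\|\cdot\|^2$ gives $\|\bar{\Phi}(\obsState(\state_0)) - \obsState(\state_\dt)\|^2 \le \tfrac{1}{|\G|}\sum_{\g} \|\Phi^{\g}(\obsState(\state_0)) - \obsState(\state_\dt)\|^2$; integrating over $\state_0$ and invoking the invariance of the first step yields $\dynRepError[1](\obsState,\bar{\Phi}) \le \dynRepError[1](\obsState,\Phi)$. Optimality of $\Phi$ at horizon $H=1$ then forces equality, and \emph{strict} convexity of the squared norm forces $\Phi^{\g}(\vx) = \Phi(\vx)$ for a.e.\ $\vx$ in the image of $\obsState$ and every $\g \in \G$, which is precisely the equivariance $\Phi \circ \rep[\vsX]{\g} = \rep[\vsX]{\g} \circ \Phi$.

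The main obstacle I anticipate is that the convexity bound above controls only the $H=1$ term: for $h \ge 2$ the iterate $\bar{\Phi}^{h}$ is \emph{not} the convex average $\tfrac{1}{|\G|}\sum_{\g}(\Phi^{\g})^{h}$, so the Jensen estimate does not extend directly. I plan to bypass this by invoking the uniform-over-horizons notion of optimality emphasized in the paragraph following \cref{eq:modelling_error}: an optimal model is in particular optimal at $H=1$, from which the preceding argument already concludes. A secondary subtlety is the passage from finite $|\G|$ to a general compact group, which is standard and handled by replacing the finite average with integration against the normalized Haar measure on $\G$, leaving every other step intact.
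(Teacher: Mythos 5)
Your proof is correct in substance but takes a genuinely different route from the paper's. The paper argues orbit by orbit: given any non-equivariant map, it selects the group element $\hat{\g}$ minimizing the per-state error over the orbit $\G\state$ and defines an equivariant competitor by transporting the prediction of $\hat{\g}\Glact\state$ to the rest of the orbit via $(\g \Gcomp \hat{\g}^{-1})$; this competitor is pointwise no worse at every horizon $H$, so an optimal model can be taken equivariant without any convexity or measure-theoretic input. You instead symmetrize globally with the Reynolds average $\bar{\Phi}=\tfrac{1}{|\G|}\sum_{\g}\rep[\vsX]{\g^{-1}}\circ\Phi\circ\rep[\vsX]{\g}$ and apply Jensen; this buys something the paper's argument does not quite deliver, namely \emph{necessity} — strict convexity of $\|\cdot\|^2$ forces any optimizer to coincide with its conjugates $\Phi^{\g}$ almost everywhere on the image of $\obsState$, whereas the paper's construction only exhibits an equally good equivariant map and leaves open whether a non-equivariant optimizer could tie. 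The price is the two caveats you already flag: (i) your conjugation-invariance of the integrated error requires the reference measure $d\state_0$ in \cref{eq:modelling_error} to be $\G$-invariant, an assumption the paper's pointwise construction avoids; and (ii) because $\bar{\Phi}^{h}\neq\tfrac{1}{|\G|}\sum_{\g}(\Phi^{\g})^{h}$ for $h\geq 2$, your Jensen step controls only the one-step term, so your conclusion rests on reading ``optimal'' as including optimality at horizon $H=1$ — defensible given the paper's gloss that optimal models have small error ``across all states and horizons,'' but strictly weaker than the paper's construction, whose iterates remain conjugates of the original map's iterates and hence handle arbitrary $H$ uniformly. Both arguments share the paper's level of informality about well-definedness on stabilized states and off the image of $\obsState$, so neither is more rigorous there.
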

%
\begin{proof}
    Let $\G\state$ denote an arbitrary set of symmetric states and $\G\obsState(\state)$ represent their symmetric representations on $\vsX$. Consider a non $\G$-equivariant evolution map, denoted as $\bar{\Phi}_{\vsX}^{\dt}$. Such map will yield varying prediction errors for the states in $\G\state$. Consequently, it is possible to identify the state with the minimum prediction error, $\hat{\g} \Glact \state \in \G\state$, given $ \hat{\g} = \argmin_{\g \in \G} \dynRepError[\g \Glact \state,H](\obsState, \bar{\Phi}_{\vsX}^{\dt}) $, as per \cref{eq:modelling_error}. Subsequently, we can construct a new map that replicates the predicted evolution of $\hat{\g} \Glact \state$ for all symmetric states, defined as $ \detDynMap[\vsX](\obsState(\g \Glact \state)) := (\g \Gcomp \hat{\g}^{-1}) \Glact \bar{\Phi}_{\vsX}^{\dt}(\obsState(\hat{\g} \Glact \state)) \stforall \g \in \G$. It is important to note that $ \dynRepError[\g \Glact \state,H](\obsState, \detDynMap[\vsX]) \leq \dynRepError[\g \Glact \state,H](\obsState, \bar{\Phi}_{\vsX}^{\dt}) $ holds for all $\g \in \G$. By iteratively applying this process for all $\state \in \sfrac{\domain}{\G}$, the resulting map will exhibit $\G$-equivariance.
\end{proof}
%
Consistent with \cref{def:symmetric_dynamical_system}, we denote models $(\vsX, \detDynMap[\vsX])$ that possess a $\G$-symmetric modelling state space $\vsX$ and a $\G$-equivariant evolution map as $\G$-symmetric models. A familiar example is the Lagrangian model of rigid-body dynamics. Since, for $\G$-symmetric robotic systems (e.g., the mini-cheetah in \cref{fig:teaser}), the modelling space $\vsX=\confSpace \times \tangConfSpace$, defined by the space of generalized position $\confSpace$ and velocity $\tangConfSpace$ coordinates, is a symmetric vector space \citep[III]{Ordonez-Apraez-RSS-23}. The symmetry of the space is characterized by the group representation $\rep[\vsX]{} := \rep[\confSpace]{} \oplus \rep[\tangConfSpace]{}$, which describes the transformations shown in \cref{fig:teaser}-a. Furthermore, the evolution map $\detDynMap[\vsX]$, defined by the standard Euler-Lagrange equations of motion, features $\G$-equivariance \citep[VII.2]{lanczos2012variational}.

A key property of (non-linear or linear) $\G$-symmetric models is the decomposition of the modelling state space into $\isoCompNum$ isotypic subspaces $\vsX = \Oplus_{\isoCompIdx=1}^\isoCompNum \vsX_\isoCompIdx$ (refer to \cref{thm:iso}). This enables the projection of entire motion trajectories $(\obsState(\state_{t}))_{t\in \Time}$ onto each isotypic subspace $\obsState(\state_{t}) := \obsState^{(1)}(\state_{t}) \oplus^\perp \dots \oplus^\perp \obsState^{(\isoCompNum)}(\state_{t})$. Since each $\vsX_{\isoCompIdx}$ is a lower-dimensional space with a reduced number of symmetries, the trajectory's decomposition entails its characterization as a superposition of distinct synergistic motions $(\obsState^{(\isoCompIdx)}(\state_{t}))_{t\in \Time} \st \obsState^{(\isoCompIdx)}(\state_{t}) \in \vsX_{\isoCompIdx}$, each constrained to feature the subset of symmetries of $\vsX_{\isoCompIdx}$ (see \cref{fig:teaser}-b). This understanding is instrumental in characterizing different system behaviors, such as different locomotion gaits or manipulation tasks, through their lower-dimensional projections onto each isotypic subspace, as detailed in \cref{fig:teaser}-c.

For $\G$-symmetric linear dynamics models $(\vsZ,\mK_\dt)$, the isotypic decomposition (\cref{thm:iso}) of 
$
\vsZ 
= 
\Oplus_{\isoCompIdx=1}^\isoCompNum \vsZ_\isoCompIdx
= 
\Oplus_{\isoCompIdx=1}^\isoCompNum \Oplus_{\isoCompIrrepIdx=1}^{\irrepMultiplicity[\isoCompIdx]} \vsZ_{\isoCompIdx,\isoCompIrrepIdx} 
$ and the associated group representation $
\rep[\vsZ]{} 
= 
\Oplus_{\isoCompIdx=1}^\isoCompNum \rep[\vsZ_{\isoCompIdx}]{} =
\Oplus_{\isoCompIdx=1}^\isoCompNum \Oplus_{\isoCompIrrepIdx=1}^{\irrepMultiplicity[\isoCompIdx]}  \irrep[\isoCompIdx]{}
$ imply the dynamics' decomposition into $\isoCompNum$ linear subsystems $\mK_\dt = \Oplus_{\isoCompIdx=1}^\isoCompNum \mK_{\dt,\isoCompIdx}$. 
Each subsystem is $\G$-equivariant, $\rep[\vsZ_{\isoCompIdx}]{g}\mK_{\dt,\isoCompIdx} = \mK_{\dt,\isoCompIdx} \rep[\vsZ_{\isoCompIdx}]{g} \stforall \g \in \G$, and evolves the state projections into isotypic subspaces independently. Moreover, each subsystem is block-decomposed into scalar multiples of the identity map $\mI_\isoCompIdx: \R^{|\irrep[\isoCompIdx]{}|} \mapsto \R^{|\irrep[\isoCompIdx]{}|}$, such that 
\begin{equation}
    \small 
    \mK_{\dt} = 
    \mathrel{
        \mathsmaller{
        \begin{bsmallmatrix}
        \mK_{\dt,1} &  & \\ 
        & &\ddots & \\
        & & & \mK_{\dt,\isoCompNum}
        \end{bsmallmatrix}
        }
    }
    , \quad  
    \mK_{\dt,\isoCompIdx} = 
    \begin{bsmallmatrix}
        c_{1,1} \mI_\isoCompIdx & \dots  & c_{1,d_\isoCompIdx} \mI_\isoCompIdx \\
        \cdots & \cdots  & \cdots                                  \\ 
        c_{d_\isoCompIdx,1} \mI_\isoCompIdx & \dots  & c_{d_\isoCompIdx,d_\isoCompIdx} \mI_\isoCompIdx
    \end{bsmallmatrix},  
    \quad 
    c_{i,j} \in \R, 
    \; d_\isoCompIdx := \sfrac{|\vsZ_{\isoCompIdx}|}{|\irrep[\isoCompIdx]{}|}, \forall\; \isoCompIdx \in [1, \isoCompNum]
    \label{eq:isosubspace_lin_structure}
\end{equation}
The block-diagonal structure of $\mK_\dt$, and the block-structure of each $\mK_{\dt,\isoCompIdx}$ are geometric constraints that originate from Schur's Lemma (\cref{lemma:schursLemma}), a standard result in harmonic analysis. These properties, essentially stated in \citet[Thm 3.5]{golubitsky2012singularities_groups_bifurcation} for finite-dimensional spaces, can be generalized to Hilbert spaces, leading to the following result:

\begin{theorem}[Isotypic decomposition of symmetric linear models]\label{thm:dyn_iso}\\
    Let $(\vsZ, \oK_\dt)$ be a $\G$-symmetric linear model, $\G$ be a finite group, and $\vsZ = \Oplus_{\isoCompIdx=1}^\isoCompNum \vsZ_\isoCompIdx$ be the space isotypic decomposition. Then $\oK_\dt$ is block-diagonal $\oK_\dt = \Oplus_{\isoCompIdx=1}^\isoCompNum \oK_{\dt,i}$, where each
    $
        \oK_{\dt,\isoCompIdx}: \vsZ_{\isoCompIdx} \mapsto \vsZ_{\isoCompIdx}
    $ is a $\G$-equivariant linear map characterizing the independent dynamics of each isotypic subspace.
    \label{thm:lin_model_decomposition}
\end{theorem}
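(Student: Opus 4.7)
The plan is to combine the $\G$-equivariance of $\oK_\dt$ (granted by \cref{prop:optimal_models} applied to the linear model, i.e.\ $\rep[\vsZ]{\g}\,\oK_\dt = \oK_\dt\,\rep[\vsZ]{\g}$ for every $\g\in\G$) with the isotypic decomposition of \cref{thm:iso} and a Schur-type rigidity argument (\cref{lemma:schursLemma}) to force all cross-blocks between different isotypic subspaces to vanish.

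Concretely, I would proceed as follows. First, using the orthogonal decomposition $\vsZ = \Oplus_{\isoCompIdx=1}^{\isoCompNum}\vsZ_{\isoCompIdx}$, let $\oP_{\isoCompIdx}\colon\vsZ\to\vsZ_{\isoCompIdx}$ be the orthogonal projector onto the $\isoCompIdx$-th isotypic component and define the blocks $\oK_{\dt,\isoCompIdx\isoCompIdx'} := \oP_{\isoCompIdx}\,\oK_\dt\,\oP_{\isoCompIdx'}^{*}\colon\vsZ_{\isoCompIdx'}\to\vsZ_{\isoCompIdx}$, so that $\oK_\dt = \sum_{\isoCompIdx,\isoCompIdx'}\oK_{\dt,\isoCompIdx\isoCompIdx'}$. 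Because $\rep[\vsZ]{}$ is block-diagonal in the isotypic basis, each projector commutes with $\rep[\vsZ]{\g}$, and therefore every block inherits the intertwining property $\rep[\vsZ_{\isoCompIdx}]{\g}\,\oK_{\dt,\isoCompIdx\isoCompIdx'} = \oK_{\dt,\isoCompIdx\isoCompIdx'}\,\rep[\vsZ_{\isoCompIdx'}]{\g}$ for all $\g\in\G$.

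Next, I would further decompose each isotypic subspace into its $\irrepMultiplicity[\isoCompIdx]$ irreducible copies, $\vsZ_{\isoCompIdx} = \Oplus_{\isoCompIrrepIdx=1}^{\irrepMultiplicity[\isoCompIdx]}\vsZ_{\isoCompIdx,\isoCompIrrepIdx}$, each isomorphic to $\bar{\vsH}_{\isoCompIdx}$ and carrying the irrep $\irrep[\isoCompIdx]{}$. Then any $\G$-equivariant operator from $\vsZ_{\isoCompIdx',\isoCompIrrepIdx'}$ to $\vsZ_{\isoCompIdx,\isoCompIrrepIdx}$ is an intertwiner between the irreps $\irrep[\isoCompIdx']{}$ and $\irrep[\isoCompIdx]{}$. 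By Schur's lemma, this intertwiner is zero whenever $\isoCompIdx\neq\isoCompIdx'$ (inequivalent irreps) and a scalar multiple of the canonical isomorphism $\mI_{\isoCompIdx}$ whenever $\isoCompIdx=\isoCompIdx'$. Summing over all pairs $(\isoCompIrrepIdx,\isoCompIrrepIdx')$ yields $\oK_{\dt,\isoCompIdx\isoCompIdx'} = 0$ for $\isoCompIdx\neq\isoCompIdx'$, which is exactly the claimed block-diagonal structure $\oK_\dt = \Oplus_{\isoCompIdx=1}^{\isoCompNum}\oK_{\dt,\isoCompIdx}$, and inside each isotypic block recovers the refined scalar-times-identity pattern already announced in \cref{eq:isosubspace_lin_structure}. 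Finally, since each $\oK_{\dt,\isoCompIdx}$ maps $\vsZ_{\isoCompIdx}$ to itself and still intertwines $\rep[\vsZ_{\isoCompIdx}]{}$, it is a $\G$-equivariant endomorphism, and the independence of the dynamics on each $\vsZ_{\isoCompIdx}$ follows from the block-diagonality together with $\oK_\dt$ being the one-step evolution map.

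The routine part is the bookkeeping of projectors and the pass to the irreducible refinement. The main obstacle is handling the possibly infinite-dimensional Hilbert-space setting: Schur's lemma is typically stated for finite-dimensional complex representations, so I would either (i) invoke the version of Schur's lemma for unitary representations of compact groups on Hilbert spaces (which applies verbatim since $\G$ is finite and the $\irrep[\isoCompIdx]{}$ act on finite-dimensional $\bar{\vsH}_{\isoCompIdx}$), or (ii) reduce to the finite-dimensional case by noting that each pair $(\vsZ_{\isoCompIdx,\isoCompIrrepIdx},\vsZ_{\isoCompIdx',\isoCompIrrepIdx'})$ is finite-dimensional so that the classical Schur lemma cited in \citet[Thm 3.5]{golubitsky2012singularities_groups_bifurcation} applies block-by-block; the countable direct sum over multiplicities does not obstruct the argument since vanishing holds uniformly for each pair.
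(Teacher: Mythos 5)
Your argument is correct and follows essentially the same route as the paper: the $\G$-equivariance of $\oK_\dt$ (which here is part of the definition of a $\G$-symmetric model, with \cref{prop:optimal_models} serving only to justify restricting attention to such models) is combined with Schur's lemma applied block-by-block to the finite-dimensional irreducible components, i.e.\ exactly the generalization of \citet[Thm 3.5]{golubitsky2012singularities_groups_bifurcation} to Hilbert spaces that the paper invokes via \cref{lemma:schursLemma}. The one caveat is that your ``scalar multiple of $\mI_\isoCompIdx$'' description of the diagonal blocks holds only for irreps of real type, but that refinement concerns \cref{eq:isosubspace_lin_structure} rather than the cross-isotypic vanishing actually asserted by the theorem (for which only the inequivalent-irreps half of Schur's lemma is needed), so the proof of the stated result is unaffected.
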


\noindent
This result has two primary applications in robot dynamics modelling.  The first involves decomposing local linear models resulting from local (or state-dependent) linearizations of nonlinear dynamical models. These are widely used in the iLQR \citep{li2004iLQR} and DDP \citep{mayne1966DDP} algorithms, which are instrumental for trajectory optimization \citep{tassa2014control,mastalli-crocoddyl}, and state estimation \citep{alessandri2003ddpestimation,kobilarov2015estimation,martinez2024inertiaest}. For these methods, the model decomposition facilitates the parallel optimization of the $\isoCompNum$ local linear sub-models, presenting a promising direction for future research. The second application pertains to the approximation of $\G$-symmetric Koopman models, which we discuss next.

\paragraph{Dynamic harmonic analysis}
For a $\G$-symmetric dynamical system $\detDynMap$, an injective state representation $\obsState: \domain \mapsto \vsZ$, and $\G$-symmetric modelling space $\vsZ$, the Koopman operator $\oK_\dt$ (\cref{eq:koopman}) is $\G$-equivariant and globally optimal by construction. Hence, by \cref{thm:dyn_iso}, it decomposes into $\isoCompNum$ Koopman operators characterizing the dynamics of each isotypic subspace. Furthermore, for each eigenpair $(\lambda, \psi)$ of $\oK_\dt$ and every $\g \in \G$, the symmetric function $\psi_{\g}:=\psi(\g\Glact(\cdot))\colon\domain\mapsto\C$ is also an eigenfunction of $\oK_\dt$ in the same eigenspace,
\begin{equation}
    \small
    \lambda\, \psi_{\g}(\state) =\lambda\, \psi(\g\Glact\state) = [\oK_\dt \, \psi](\g\Glact\state) = \psi(\detDynMap(\g\Glact\state)) = \psi(\g\Glact\detDynMap(\state)) = [\oK_\dt \, \psi_{\g}](\state),
    \label{eq:orbit_koopman_eigenfunctions}
\end{equation}
implying that the Koopman eigenspaces are $\G$-symmetric spaces. Thus, applying the isotypic decomposition to each eigenspace (\cref{thm:iso}) captures the relation of the temporal evolution of eigenfunctions with distinct symmetries (encoded by the \textsl{irreps}) via Koopman eigenvalues. 
This global decomposition of the dynamics in isotypic subspaces and its symmetry-aware spectral decomposition is referred to as \textit{dynamics harmonic analysis} (DHA).  As we show in \cref{sec:method,sec:experiments}, DHA can be leveraged to learn data-driven approximations of the $\G$-equivariant Koopman operator.


\section{{$\G$}-symmetric data-driven Koopman models}\label{sec:method}

The Koopman operator formalism, while practically unfeasible, has inspired numerous data-driven models aiming to approximate the infinite-dimensional operator in finite-dimensions \citep{brunton-koopman}. This process relies on a dataset of observations of state trajectories $\sD =\{(\obsState(\state_t))_{t=0}^{H}, \dots \}$ on a modelling space $\vsX \subseteq \R^{\dimModelSpace}$. Where $\obsState = [\obsFn_1,..., \obsFn_\dimModelSpace]: \domain \mapsto \vsX$ is a state representation composed of physical observable functions $\obsFn_i : \domain \mapsto \R$ that are measured or estimated from the state (e.g., position, momentum, energy). Then, the objective is to find a (latent) state representation vector-valued function $\latObsState = [\latObsFn_1,..., \latObsFn_\dimLatModelSpace]: \domain \to \widetilde{\vsZ} \subseteq \R^\dimLatModelSpace$, that spans a finite-dimensional space of functions $\obsSpace[\widetilde{\vsZ}]:=\{ \latObsFn_{\repObsFn}(\cdot):= \sum_{i=1}^{\dimLatModelSpace} \alpha_{i} \latObsFn_{i}(\cdot) = \innerprod{\latObsState(\cdot)}{\repObsFn}, \st \repObsFn\in\R^{\dimLatModelSpace}\}$ (see \cref{appendix:observable_functions_and_numerical_representation}), on which the Koopman operator $\oK_\dt$ is approximated by a matrix $\mK_\dt^*\in \C^{\dimLatModelSpace\times\dimLatModelSpace}$, given by
%
$
    (\oK_\dt \latObsFn_{\repObsFn})(\cdot) 
    \approx 
    \latObsFn_{\mK_\dt^* \repObsFn}(\cdot)
    := 
    \innerprod{\latObsState(\cdot)}{\mK_\dt^* \repObsFn} 
    = 
    \innerprod{\mK_\dt \latObsState(\cdot)}{\repObsFn}
$ \citep{kostic2022kernels}.
Where $\obsSpace[\widetilde{\vsZ}]$ represent the finite-dimensional approximation of the space of functions  $\obsSpace[\vsZ]$ on which $\oK_\dt$ is defined (see \cref{eq:koopman}), and $\repObsFn=[\alpha_1,\dots] \in \R^\dimLatModelSpace$ is the coefficient vector representation of $\latObsFn_\repObsFn$ in a basis of $\obsSpace[\widetilde{\vsZ}]$. 

Among different approaches to building Koopman data-driven models, we focus on the Dynamics Auto-Encoder (DAE) architecture \citep{lusch2018deep}. This model parameterizes the matrix approximating the Koopman operator $\mK_\dt^*$ as a trainable linear map, and the (latent) state representation $\latObsState(\cdot) := (\encoder \circ \obsState)(\cdot): \domain \mapsto \widetilde{\vsZ}$ with an encoder neural network $\encoder$. A decoder $\decoder: \widetilde{\vsZ} \mapsto \vsX$ is also defined to reconstruct states in the physical observable space. The cost function for DAE is composed of a reconstruction loss (encouraging injectivity of $\latObsState$) and a state prediction error in both physical observable $\vsX$ and latent $\widetilde{\vsZ}$ spaces (encouraging the minimization of the modelling error):
\begin{equation*}
    \small
    \loss(\state_t,H) 
    =
    \mathrel{\mathsmaller{
        \sum_{h=0}^{H}
        }
    }
    \ubcolor[awesomeorange]{
        ||
            \obsState(\state_{t+h\dt}) - \decoder(\mK_\dt^{h}\latObsState(\state_{t})) 
        ||^2
    }{\text{Reconstruction and} \; \dynRepError[\state_t,H](\obsState, \detDynMap[\vsX])}
    +
    \gamma
     \ubcolor{
        ||
            \latObsState(\state_{t+h\dt}) - \mK_\dt^{h} \latObsState(\state_{t})
        ||^2
    }{\dynRepError[\state_t,H](\latObsState, \mK_\dt)}
    \vspace*{-0.3cm}
\end{equation*}
\noindent
Where $\gamma$ balances the modelling errors in $\widetilde{\vsZ}$ and $\vsX$, and the prediction horizon is assumed $H \ll |\sD|$.
\paragraph{The Equivariant DAE (eDAE)} 
When modelling $\G$-symmetric dynamical systems (\cref{def:symmetric_dynamical_system}), the DAE architecture can be adapted to leverage the symmetry priors. 
First, to exploit the theoretical isotypic decomposition of the space of observable functions $\obsSpace[\vsZ]$ (\cref{prop:optimal_models,thm:iso}), we must ensure that $\obsSpace[\widetilde{\vsZ}]$ is a $\G$-symmetric function space, such that it can be decomposed in isotypic subspaces $\obsSpace[\widetilde{\vsZ}] := \Oplus_{\isoCompIdx=1}^\isoCompNum \obsSpace[\widetilde{\vsZ}_\isoCompIdx]$. This can be achieved by restricting the state representation $\latObsState: \domain \rightarrow \widetilde{\vsZ}$ to be a $\G$-equivariant map, 
such that both $\widetilde{\vsZ}$ and $\obsSpace[\widetilde{\vsZ}]$ are $\G$-symmetric spaces (see \cref{appendix:symmetric_function_spaces}). In practice, this is done by constraining the encoder to be a $\G$-equivariant neural network, such that:
\begin{equation}
    \small
    \g \Glact \latObsState(\state_t) 
    = 
    \rep[\widetilde{\vsZ}]{g} (\encoder \circ \obsState)(\state) 
    = 
    \encoder (\rep[\vsX]{g}\obsState(\state)) 
    = 
    \encoder(\obsState(\g\Glact\state)) 
    = 
    \latObsState(\g \Glact \state_t)
    \stforall \g \in \G, \state \in \domain.
\end{equation}
Where the group representation in the physical observable space $\rep[\vsX]{}$ is assumed to be known from prior knowledge, and the latent group representation $\rep[\widetilde{\vsZ}]{}$ is defined to be equivalent to the direct sum of $\sfrac{\dimLatModelSpace}{|G|} 
$ copies of the group regular representation, following \citep[thm. 1.12]{Knapp1986}.  

Then, to exploit the theoretical $\G$-equivariance of $\oK_\dt$ (\cref{prop:optimal_models,thm:lin_model_decomposition}), we parameterize $\mK_\dt$ as a $\G$-equivariant matrix, $\rep[\widetilde{\vsZ}]{\g}\mK_\dt = \mK_\dt \rep[\widetilde{\vsZ}]{\g} \stforall \g \in \G$. Furthermore, if the latent group representation is defined on the isotypic basis 
$
    \rep[\widetilde{\vsZ}]{}
    =
    \Oplus_{\isoCompIdx=1}^\isoCompNum \rep[{\widetilde{\vsZ}_{\isoCompIdx}}]{}
    = 
    \Oplus_{\isoCompIdx=1}^\isoCompNum \Oplus_{\isoCompIrrepIdx=1}^{\irrepMultiplicity[\isoCompIdx]} \irrep[\isoCompIdx]{}
$ (see \cref{thm:iso}), then the matrix $\mK_\dt$ decomposes in block-diagonal form $\mK_\dt = \Oplus_{\isoCompIdx=1}^\isoCompNum \mK_{\dt,\isoCompIdx}$, where each $\mK_{\dt,\isoCompIdx}$ is a $\G$-equivariant matrix characterizing the dynamics on the isotypic subspace $\widetilde{\vsZ}_{\isoCompIdx}$. This constraint on the learnable parameters of $\mK_\dt$ is relevant geometric prior as it (i) ensures the learned latent dynamics model is $\G$-symmetric, (ii) exploits the orthogonality between distinct isotypic subspaces (eliminating spurious correlations between dimensions of $\widetilde{\vsZ}$), and (iii) constraints the minimum dimension of the eigenspaces of each isotypic subspace operator $\mK_{\dt,\isoCompIdx}$ to match the dimension of the subspace irreducible representation dimension $|\irrep[\isoCompIdx]{}|$ (\cref{eq:isosubspace_lin_structure}). This ensures that eigenfunctions of each isotypic subspace $\obsSpace[\vsZ_\isoCompIdx]$ are appropiatedly approximated in groups of symmetric functions by their coefficient eigenvectors representations $\G\vv^{(\isoCompIdx)} = \{\g \Glact \vv^{(\isoCompIdx)} \st \mK_{\dt,\isoCompIdx}\vv^{(\isoCompIdx)} = \lambda^{(\isoCompIdx)}\vv^{(\isoCompIdx)}, \; \vv^{(\isoCompIdx)}\in \widetilde{\vsZ}_i\}$, with temporal dynamics governed by an eigenvalue $\lambda^{(\isoCompIdx)}$; refer to \cref{appendix:symmetric_function_spaces}.  

\vspace{-0.3cm}
\section{Experiments and results}\label{sec:experiments}
%
We conduct two experiments comparing equivariant and non-equivariant Koopman models using DAE and eDAE architectures. We set $\gamma= \sqrt{\sfrac{|\vsX|}{|\vsZ|}}$ in all experiments to balance the error across dimensions of $\vsX$ and $\vsZ$. Lastly, to test modelling error and generalization, we use testing datasets with trajectories uniformly sampled across all state space quotient sets $\domain / \G$ (see \cref{fig:teaser}-a).

\begin{figure}[t!]
    \centering
    \includegraphics[width=\textwidth]{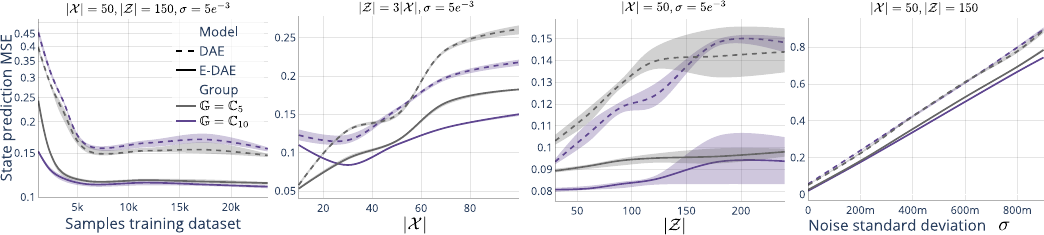}
    \vspace*{-0.5cm}
    \caption{
        Test set prediction mean square error (MSE) of learned Koopman models (DAE and eDAE) for synthetic systems with symmetry groups $\CyclicGroup[5]$ and $\CyclicGroup[10]$, varying state dimension $|\vsX|$, latent model space dimension $|\vsZ|$, and noise variance $\sigma$. Solid lines and shaded areas represent the mean, maximum, and minimum prediction error among $4$ training seeds. (a) MSE vs. training samples. (b) MSE over varying state dimensions. (c) MSE over varying dimensionality of the latent model space. (d) MSE over varying noise variance $\sigma$.
        \label{fig:linear_exp_results}
        \vspace{-1.0em}
    }
\end{figure}

\paragraph{Synthetic symmetric dynamical systems with finite state symmetry groups} This experiment models synthetic nonlinear $\G$-symmetric systems with arbitrary groups $\G$. The systems are constrained stable linear systems with stochastic perturbations $\obsState(\state_{t + \dt}) = \mA_\dt \obsState(\state_{t}) + \vepsilon_t, , \mC \state_{t} \geq \vc$, where $\obsState(\state) \in \vsX \subseteq \R^{\dimModelSpace}$ is the system state's numerical representation, $\mA_\dt \in \R^{\dimModelSpace \times \dimModelSpace}$ the linear dynamics matrix, $\vepsilon \in \R^\dimModelSpace$ a white-noise stochastic process with standard deviation $\sigma$, and $\mC \in \R^{n_c \times \dimModelSpace}, \vc \in \R^{n_c}$ the parameters of $n_c$ inequality hyper-plane constraints. These systems are $\G$-symmetric if $\mA_\dt$ is $\G$-equivariant $\rep[\vsX]{\g} \mA_\dt = \mA_\dt \rep[\vsX]{\g} \stforall \g \in \G$ and any constraint is also enforced for all symmetric states $\mC_{k,:}\;  \g \Glact \obsState(\state) \geq \vc_{k} \stforall k \in [1, n_c], \g \in \G$.

These synthetic systems let us assess symmetry exploitation in learning Koopman models for arbitrary groups $\G$, system's dimensionality $|\vsX|$, latent state dimensionality $|\vsZ|$, and noise standard deviation $\sigma$. The results show that the eDAE architecture provides superior models with better sample efficiency and generalization (\cref{fig:linear_exp_results}-a), reduced sensitivity to the dimensionality of $\vsX$ (\cref{fig:linear_exp_results}-b) and $\vsZ$ (\cref{fig:linear_exp_results}-c), and improved noise robustness (\cref{fig:linear_exp_results}-d).


\paragraph{Modelling quadruped closed-loop dynamics}
In this experiment, we investigate using a Koopman model for robot dynamics while quantifying the impact of symmetry exploitation. The focus is on modelling the closed-loop dynamics of the mini-cheetah quadruped robot's locomotion on mildly uneven terrain. The training data comprises a few motion trajectories executed by a $\G$-equivariant model predictive controller that tracks a desired target base velocity with a fixed trotting periodic gait \citep{amatucci}. As a result, the closed-loop dynamics are $\G$-equivariant \citep[eq.~(3-4)]{Ordonez-Apraez-RSS-23} and stable, with a limit-cycle trajectory describing the gait cycle and transient dynamics governed by the controller's correction for tracking errors.
The state of the system is numerically represented as $\obsState(\state_{t}) = [\q[js,t], \dq[js,t], z_{t}, \vo_{t}, \vv_{\text{err},t}, \vw_{\text{err},t}] \in \domain \subseteq \R^{37}$, composed of the joint-space generalized position $\q[js] \in \confSpace_{js} \subseteq \R^{12}$, and velocity $\dq[js] \in \tangConfSpace_{js} \subseteq \R^{12}$ coordinates, base height $z_t \in \R^1$, base orientation quaternion $\vo \in \R^4$, and the error in the desired linear and angular base velocities $\vv_{\text{err},t} \in \R^3$ and $\vw_{\text{err},t}\in \R^3$, respectively.

The symmetry group of the robot is $\G = \KleinFourGroup \times \CyclicGroup[2]$, of order $8$ (see \href{https://github.com/Danfoa/MorphoSymm/blob/devel/docs/static/animations/mini_cheetah-C2xC2xC2-symmetries_anim_static.gif}{symmetric states}). This group implies the decomposition of the system's state set into $8$ copies of the quotient set of unique states $\domain\, /\, \G$ (\cref{sec:symmetries_dynamical_systems}), and the isotypic decomposition of the physical modelling space $\vsX$ and the space of observable functions  $\obsSpace[\vsZ]$ into at most $8$ isotypic subspaces (\cref{thm:iso,thm:lin_model_decomposition}). Therefore, for this dynamical system exploiting the symmetry prior is crucial to mitigate the curse of dimensionality \citep{higgins2022symmetry} and biases of the training dataset (which is collected from trajectories originating from one of the $8$ quotient sets; see \href{https://danfoa.github.io/MorphoSymm/static/dynamic_animations/mini-cheetah_animation_C2xC2xC2.gif}{animation}). The results demonstrate superior performance of the $\G$-equivariant Koopman models (eDAE) over the models trained with data-augmentation DAE$_{aug}$, and without symmetry exploitation (DAE) in terms of sample-efficiency (\cref{fig:mini_cheetah_results}-a), forecasting error (\cref{fig:mini_cheetah_results}-b-d), and robustness to hyper-parameters variation (\cref{fig:mini_cheetah_results}-c). Furthermore, models exploiting the entire symmetry group $\KleinFourGroup \times \CyclicGroup[2]$ consistently outperform those exploiting only the subgroup $\KleinFourGroup$. 
The $\G$-symmetric eDAE models excel due to their ability to capture both transient and stable locomotion dynamics from an initial configuration $\state_0$, and to generalize to symmetric states $\G\state_0$, potentially unseen during training; refer to \cref{fig:teaser}-a and \href{https://danfoa.github.io/MorphoSymm/static/dynamic_animations/mini-cheetah_animation_C2xC2xC2.gif}{animation}. 
Moreover, by utilizing DHA, these models can decompose the learned latent linear dynamics into the linear dynamics of each isotypic subspace, thereby accurately capturing the distinct roles (and relevance) of isotypic subspaces in locomotion dynamics (see \cref{fig:teaser}-b,c).

\begin{figure}[t!]
    \centering\includegraphics[width=\textwidth]{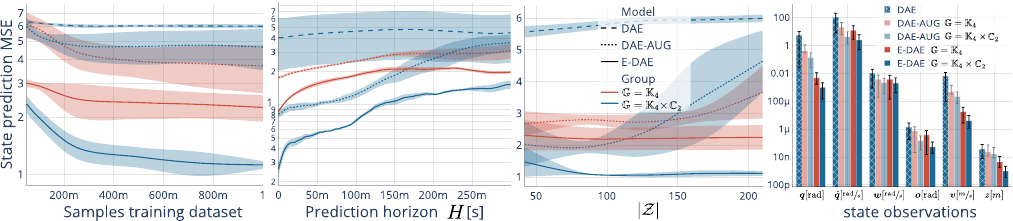}
    \vspace*{-0.7cm}
    \caption{
    The test prediction mean square error (MSE) of Koopman models (DAE, DAE$_{aug}$, eDAE) modelling the closed-loop dynamics of locomotion of the mini-cheetah robot. We compare models exploiting the system's full symmetry group $\G=\KleinFourGroup\times\CyclicGroup[2]$ and the subgroup $\G=\KleinFourGroup$. Solid lines and shaded areas show the mean, max, and min prediction error among $4$ training seeds. (a) MSE vs. training samples. (b) MSE vs. prediction horizon. (c) MSE for varying latent space dimensionality $|\vsZ|$. (d) MSE of measurable state observables in original units.
    \label{fig:mini_cheetah_results}
    \vspace*{-0.5cm}
    }
\end{figure}

\vspace{-0.3cm}
\section{Conclusions}
We introduced the use of harmonic analysis for decomposing and understanding the dynamics of symmetric robotic systems. By partitioning the state space into isotypic subspaces, we have shown how complex motions can be characterized as the superposition of lower-dimensional, symmetric and synergistic motions. This entails the decomposition of (local and global) linear models of the system's dynamics into independent models for each subspace. Leveraging this, we learn a data-driven global linear model using a novel equivariant deep-learning architecture to approximate the Koopman operator. The method's practical validity is evidenced by presenting the first successful attempt to learn a liner model of the closed-loop dynamics of a quadruped robot's locomotion.

\newpage
\acks{This work was supported in part by  PNRR MUR Project PE000013 CUP J53C22003010006 "Future Artificial Intelligence Research (hereafter FAIR)", funded by the European Union – NextGenerationEU}

\bibliography{references}


\newpage

\section*{Notation}

\begin{tabular*}{\textwidth}{@{\extracolsep{\fill}} p{1in}p{\dimexpr\textwidth-1in-2\tabcolsep\relax}}
  \multicolumn{2}{c}{\textbf{Numbers and Arrays}} \\
  $x$ & A scalar, or scalar function $x(\cdot) $              \\
  ${\vx}$ & A vector, or vector-valued function $\vx(\cdot)$  \\
  $\vx_1 \oplus \vx_2$ & Direct sum (stacking) of vectors, such that $\vx_1 \oplus \vx_2 := \begin{bsmallmatrix} \vx_1 \\ \vx_2 \end{bsmallmatrix}$\\
  $\mK$ & A matrix\\
  $\mA \oplus \mB$ & Direct sum of matrices, such that $\mA \oplus \mB := \begin{bsmallmatrix} \mA & \mO \\ \mO & \mB \end{bsmallmatrix}$\\
  $\oK$ & A linear operator \\
  $\oA \oplus \oB$ & Direct sum of linear operators, such that $\oA \oplus \oB := \begin{bsmallmatrix} \oA & \oO \\ \oO & \oB \end{bsmallmatrix}$\\ 
  $\mI$ & Identity matrix\\
  $\oI$ & Identity operator\\
  \multicolumn{2}{c}{} \\ 
  \multicolumn{2}{c}{\textbf{Sets, Vector Spaces, and Function spaces}} \\
  $\vsX, \vsZ, \vsH, \vsF$ & A vector/Hilbert space \\
  $\sI_{\vsX}$ & A basis set of the vector space $\vsX$\\
  $\R,\C$ & The set of real and complex numbers \\
  $\vsX \oplus \vsY$ & Direct sum of vector spaces $\vsX$ and $\vsY$ such that if $\vx \in \vsX$ and $\vy \in \vsY$, then $\vx \oplus \vy \in \vsX \oplus \vsY$\\ 
  $\vsF$ & A function space \\
  $f_\repObsFn \in \vsF$ & A function in the function space $\vsF$, represented with the coefficients $\repObsFn$ on a chosen basis $\sI_{\vsF} = \{\hat{f}_1, \dots \}$, such that $f_\repObsFn(\cdot) = \sum_{i=1}^{\dimModelSpace} \alpha_{i} \hat{f}_{i}(\cdot)$, given $\repObsFn = [\alpha_1, \dots]$.\\
  \multicolumn{2}{c}{} \\ 
  \multicolumn{2}{c}{\textbf{Group and representation theory}} \\
  $\G$ & A symmetry group\\
  $\g, \g_1, \g_a$ & A symmetry group element\\  
  $\g \Glact \vx$ & The (left) group action of $\g$ on $\vx$ defined by $\g \Glact \vx := \rep[\vsX]{g}\vsX$, for a chosen basis $\sI_{\vsX}$\\
  $\rep[\vsX]{} $ & A representation of the group $\G$ on the vector space $\vsX$, defined for a chosen basis $\sI_{\vsX}$ \\
  $\rep[\vsX]{\g}$ & Representation of the group element $\g$ on the vector space $\vsX$, defined for a chosen basis $\sI_{\vsX}$ \\
  $\rep[\vsX]{} \oplus \rep[\vsY]{} $ & Direct sum of group representations, 
  such that $\rep[\vsX]{} \oplus \rep[\vsY]{} := \begin{bsmallmatrix} \rep[\vsX]{} &  \\  & \rep[\vsY]{} \end{bsmallmatrix}$\\
  $\G \vx$ & The group orbit of $\vx$, defined as $\G\vx := \{ \g \Glact \vx \;|\; \g \in \G\}$\\
  $\G[a] \times \G[b]$ & Direct product of groups $\G[a]$ and $\G[b]$\\
  $\UGroup[\vsX]$ & Unitary group on the vector space $\vsX$\\
  $\GLGroup[\vsX]$ & General Linear group on the vector space $\vsX$\\
  $\CyclicGroup[n]$ & Cyclic group of order $n$\\
  $\KleinFourGroup$ & Klein four-group\\
\end{tabular*}

\appendix
\renewcommand{\thesection}{\Roman{section}}

\section{Appendix}

\subsection{Finite dimensional function spaces of observable functions}
\label{appendix:observable_functions_and_numerical_representation}
Given a set of observable functions $ \basisSet_{\vsX} = \{\obsFn_{1}, \dots, \obsFn_{\dimModelSpace} \st \obsFn_{i}:\domain \mapsto \R, \forall i \in [1,\dimModelSpace]\}$, we can interprete these functions as the components of a vector-valued function $\obsState = [\obsFn_{1}, \dots] : \domain \mapsto \R^{\dimModelSpace}$, that enables the numerical representation of the state $\state$ as a point in a finite-dimensional vector space $\obsState(\state) \in \vsX \subseteq \R^{\dimModelSpace}$. For the objective of our work, we will also interprete $\basisSet_{\vsX}$ as the basis set of a finite-dimensional function space $\vsF_\vsX: \domain \mapsto \R$, such that any observable function $\obsFn \in \vsF_\vsX$ is defined by the linear combination of the basis functions 
$\obsFn_{\repObsFn}(\state) := \innerprod{\obsState(\,\cdot\,)}{\repObsFn} = \sum_{i=1}^{\dimModelSpace} \alpha_{i} \obsFn_{i}(\state) = \obsState(\state)^\transpose \repObsFn$. 
Where $\repObsFn=[\alpha_1,\dots] \in \R^\dimModelSpace$ are the coefficients of $\obsFn$ in the basis of $\vsF_\vsX$, and the notation $\obsFn_{\repObsFn}(\,\cdot\,)$ highlight the relationship between the function $\obsFn$ and its coefficient vector representation $\repObsFn$. 

\subsubsection{Symmetries of the state representation} 
\label[appendix]{appendix:equivariant_state_representations}
\begin{wrapfigure}{r}{0.25\linewidth}
    \begin{equation}
        \small
        \homomorphismDiag
            {\state}
            {\g \Glact \state}
            {\obsState(\state)}
            {\g \Glact \obsState(\state)}
            {\obsState}
            {\g \in \G}
        \label{eq:state_representation_equivariant}
    \end{equation}
\end{wrapfigure}
When the dynamical system possess a state symmetry group $\G$ (\cref{def:symmetric_dynamical_system}), appropriate numerical representations of the state are constrained to be $\G$-equivariant vector-value functions (see \cref{eq:state_representation_equivariant,prop:optimal_models}). This ensures that the symmetry relationship between any state $\state \in \domain$ and its symmetric states $\G\state:=\{ \g \Glact \state \st \g\in\G\}$ is preserved in the representation space $\vsX$, such that $\G \obsState(\state) = \{\g \Glact \obsState(\state) \st \g \in \G \} \subset \vsX$. 

\subsubsection{Symmetric function spaces} 
\label[appendix]{appendix:symmetric_function_spaces}
When $\vsX$ is a $\G$-symmetric space, the group is defined to act on any chosen basis set of the space, including the observable functions $\basisSet_\vsX$. This, in turn, ensures that the finite-dimensional function space $\text{span}(\basisSet_\vsX) := \vsF_\vsX : \domain \mapsto \R$ features the symmtry group $\G$,
being the elements of the space $\G$-equivariant functions, i.e., $\vsF_\vsX = \{\obsFn \st \g \Glact \obsFn_{ \repObsFn}(\state) = \obsFn_{ \repObsFn}(\g^{-1} \Glact \state) = \obsFn_{ \g \Glact \repObsFn}(\state), \forall \g \in \G\}$  (see \cref{def:action_funct_space}). Where the notation $\g \Glact \obsFn_{ \repObsFn}(\state) = \obsFn_{ \g \Glact \repObsFn}(\state)$ describes the action of a symmetry transformation on a observable function, as a linear transformation on its coefficients  vector representation $\repObsFn$.

\subsection{Group and representation theory}

\begin{definition}[\textbf{Group action on a function space}] 
    \label{def:action_funct_space}
        The (left) action of a group $\G$ on the space of functions $\vsX: \domain \rightarrow \C$, where $\domain$ is a set with symmetry group $\G$, is defined as: 
        \begin{subequations}
            \begin{align}
                \mapping
                {(\Glact)}
                { \G \times \vsX}{\vsX}
                {(\g,\obsFn(\state))}
                {\g \Glact \obsFn(\state) \doteq \obsFn(\g^{-1} \Glact \state)}
            \label{eq:symmetry_action_on_f_space_associativity-a}
            \end{align}
            From an algebraic perspective, the action inversion (\href{https://math.stackexchange.com/questions/387266/group-action-on-vector-space-of-all-functions-g-to-mathbbc}{\textit{contragredient representation}}) emerges to ensure that the symmetry group in the function space is a homomorphism of the group in the domain $(\g_1 \Glact \g_2) \Glact \obsFn(\state) \doteq \obsFn((\g_1 \Glact \g_2)^{-1}\Glact \state)$. Which can be proven by a couple of algebraic steps:
            \begin{align}
            \small
                (\g_1 \Glact (\g_2 \Glact \obsFn))(\state) 
                = 
                    (\g_1 \Glact \obsFn_{\g_2})(\state) 
                = 
                    \g_2 \Glact \obsFn(\g_1^{-1}\state) 
                =
                    \obsFn((\g_2^{-1} \Glact \g_1^{-1}) \Glact \state) 
                = 
                    \obsFn((\g_1 \Glact \g_2)^{-1} \Glact \state)
            \label{eq:symmetry_action_on_f_space_associativity-b}
            \end{align}
        \end{subequations}
        From a geometric perspective, when $\vsX$ is a separable Hilbert space, each function can be associated with its vector of coefficients representation $\obsFn_{\repObsFn}(\,\cdot\,) := \sum_{i=1}^{\dimModelSpace} \alpha_{i} \obsFn_{i}(\,\cdot\,) = \obsState(\,\cdot\,)^\transpose \repObsFn$. Here, $\obsState=[\obsFn_1, \dots ]$ represents the basis functions of $\vsX$. As the function space is symmetric, the group $\G$ acts on the basis set, leading to a group representation acting on the basis functions $\g \Glact \obsState(\,\cdot\,) = \rep[\vsX]{\g} \obsState(\,\cdot\,)$. The unitary representation of the group $\G$ on the function space is denoted by $\rep[\vsX]{}: \G \to \UGroup[\vsX]$, which is an invertible matrix/operator. This representation enable us to interpret the symmetry transformations of the function space as point transformations, where the points are the function's coefficient vector representation $\repObsFn$, that is:
        \begin{align}
            \small
            \g \Glact \obsFn_{\repObsFn}(\,\cdot\,) & := \sum_{i=1}^{\dimModelSpace} \alpha_{i} \obsFn_{i}(\g^{-1} \Glact\,\cdot\,) \nonumber \\
            & = (\obsState(\g^{-1} \Glact \,\cdot\,))^\transpose  \repObsFn \nonumber \\
            & = (\g^{-1} \Glact \obsState(\,\cdot\,))^\transpose  \repObsFn \nonumber \\
            & = \obsState(\,\cdot\,)^\transpose  \g \Glact \repObsFn \nonumber \\
            & = \obsFn_{\g \Glact \repObsFn}(\,\cdot\,)
        \end{align}
            
    \end{definition} 

\begin{lemma}[Schur's Lemma for Unitary representations {\citep[Prop 1.5]{Knapp1986}}]
    \label{lemma:schursLemma}
    Consider two Hilbert spaces, $\HilbertSpace$ and $\HilbertSpace'$, each with their respective irreducible unitary representations, denoted as $\irrep[\HilbertSpace]{} : \G \rightarrow \UGroup[\HilbertSpace]$ and $\irrep[\HilbertSpace']{} : \G \rightarrow \UGroup[\HilbertSpace']$. Suppose $\equivLinMap: \HilbertSpace \rightarrow \HilbertSpace'$ is a linear equivariant operator such that $\irrep[\HilbertSpace']{}\equivLinMap = \equivLinMap\irrep[\HilbertSpace]{}$. If the irreducible representations are not equivalent, i.e., $\irrep[\HilbertSpace]{} \nsim \irrep[\HilbertSpace']{}$, then $\equivLinMap$ is the trivial (or zero) map. Conversely, if $\irrep[\HilbertSpace]{} \sim \irrep[\HilbertSpace']{}$, then $\equivLinMap$ is a constant multiple of an isomorphism (\cref{def:equivLinearMapsLong}). Denoting $\oI$ as the identity operator, this can be expressed as:  
    \begin{subequations}
        \begin{align}
            \irrep[\HilbertSpace]{} \nsim \irrep[\HilbertSpace']{} & \iff && \bm{0}_{\HilbertSpace'} = \equivLinMap \vh \st \forall \; \vh \in \HilbertSpace 
            \\ 
            \irrep[\HilbertSpace]{} \sim \irrep[\HilbertSpace']{} & \iff && \equivLinMap = \alpha \oU \st 
                \alpha \in \C, 
                \oU \cdot \oU^{H} = \oI 
            \\
            \irrep[\HilbertSpace]{} = \irrep[\HilbertSpace']{} & \iff && \equivLinMap = \alpha \oI  
        \end{align}
    \end{subequations}

    \noindent
    For intiution refeer to the following blog \href{https://terrytao.wordpress.com/2011/01/23/the-peter-weyl-theorem-and-non-abelian-fourier-analysis-on-compact-groups/}{post}
\end{lemma}

\begin{definition}[Group stable space \& Group irreducible stable spaces] 
    \label{def:G_stable_subspace}
    Let $\rho_{\vsX}: \G \rightarrow \UGroup[\vsX]$ be a unitary representation on the Hilbert space $\vsX$. A subspace $\vsX' \subseteq \vsX$ is said to be \highlight{$\G$-stable} if 
    \begin{equation}
        \rep[\vsX]{g} \vh \in \vsX' \quad \st \vh \in \vsX' \quad \forall \quad \vw \in W, \g \in \G. 
    \end{equation}
    If the only $\G$-stable subspaces of $\vsX'$ are $\vsX'$ itself and $\{\bm{0}\}$, the space is said to be an \highlight{irreducible $\G$-stable} space. 
\end{definition}

\begin{definition}[Homomorphism, Isomorphism and equivariant linear maps]
    \label{def:equivLinearMapsLong}
    Let $\G$ be a symmetry group and $\vsX$ and $\vsX'$ be two distinct symmetric Hilbert spaces endowed with unitary representations $\rep[\vsX]{}:\G \rightarrow \UGroup[\vsX]$ and $\rep[\vsX']{}: \G \rightarrow \UGroup[\vsX']$, respectively. 
    
    A linear map $\equivLinMap: \vsX \rightarrow \vsX'$ is said to be \highlight{$\G$-equivariant} if it commutes with the group representations: $\rep[\vsX']{\g} \equivLinMap = \equivLinMap \rep[\vsX]{\g} \st \forall \; \g \in \G$. The space of all $\G$-equivariant linear maps is refered to as the space of \highlight{homomorphisms} (structure preserving maps) and its denoted as $\homomorphism[\G]{\vsX}{\vsX'}$
    The spaces are said to be \highlight{isomorphic} if the $\G$-equivariant map is invertible. The space of all invertible $\G$-equivariant linear maps between $\vsX$ and $\vsX'$ is denoted as $\isomorphism[\G]{\vsX}{\vsX'} \subset \homomorphism[\G]{\vsX}{\vsX'}$.

    Graphically, the diagrams of a homomorphism and isomorphism between $\vsX$ and $\vsX'$ are:
    \begin{equation}
        \underbrace{
            \homomorphismDiag
                {\vsX}
                {\vsX}
                {\vsX'}
                {\vsX'}
                {\equivLinMap}
                {\rep[\vsX]{}}
            }_{Homomorphism} 
        \quad \equivLinMap \in \homomorphism{\vsX}{\vsX'}
        \qquad \qquad 
        \underbrace{\isomorphismDiag{\vsX}{\vsX'}{\rep[\vsX]{}}{\rep[\vsX']{}}{\equivLinMap}}_{Isomorphism} \quad \equivLinMap \in \isomorphism{\vsX}{\vsX'}
    \end{equation}
\end{definition}

\end{document}